\newcounter{conjecturas}
\def\seccaonovaconjectura{\protect%
  \refstepcounter{conjecturas}%
  {Problem \theconjecturas}%
  }
\def\novaconjectura{\protect%
  {Problem \theconjecturas}%
  }
\newcommand{\JGEX}{\textit{JGEX\/}}
\newcommand{\GeoGebra}{\textit{GeoGebra\/}}
\newcommand{\GCLC}{\textit{GCLC\/}}
\newcommand{\ProverNine}{\textit{Prover9\/}}
\newtheorem*{theorem*}{Theorem}
\title{Four Geometry Problems to Introduce Automated Deduction in
  Secondary Schools}
\author{Pedro Quaresma\thanks{This work is funded
    by national funds through the FCT - Foundation for Science and
    Technology, I.P., within the scope of the project CISUC -
    UID/CEC/00326/2020 and by European Social Fund, through the
    Regional Operational Program Centro 2020.}
  \institute{
    CISUC / Department of Mathematics, \\
    University of Coimbra, Portugal}
  \email{pedro@mat.uc.pt}
  \and
  Vanda Santos
  \thanks{This work is funded
    by national funds through the FCT - Foundation for Science and
    Technology, I.P., within the scope of the project UIDB/00194/2020 and in the scope of the framework contract foreseen in the numbers 4, 5 and 6 of the article 23, of the Decree-Law 57/2016, of August 29, changed by Law 57/2017, of July 19. }
  \institute{
    CIDTFF / University of Aveiro \\
    and CISUC, Portugal}
  \email{vandasantos@ua.pt}
}
\begin{document}

\lstdefinelanguage{GDDM}{
  extendedchars=true,
  inputencoding=latin1,
  morekeywords={fof,axiom},
  basicstyle=\footnotesize
}

\lstdefinelanguage{Algoritmo}{
  extendedchars=true,
  inputencoding=latin1,
  morekeywords={WHILE,DO,ENDWHILE,IF,THEN,ELSE,ENDIF},
  basicstyle=\footnotesize
}

\lstdefinelanguage{FOF}{
  extendedchars=true,
  inputencoding=latin1,
  morekeywords={include,fof,midp,para,conjecture},
  basicstyle=\footnotesize
}

\lstdefinelanguage{ProverNine}{
  extendedchars=true,
  inputencoding=latin1,
  morekeywords={all,midp,para},
  basicstyle=\footnotesize
}

\lstdefinelanguage{maude} {
  basicstyle=\scriptsize,
  upquote=true,
  columns=fullflexible,
  showstringspaces=false,
  extendedchars=true,
  breaklines=true,
  showtabs=false,
  showspaces=false,
  showstringspaces=false,
  identifierstyle=\ttfamily,
  keywordstyle=\color[rgb]{0,0,1},
  ndkeywordstyle=\color[RGB]{165, 42, 42},
  commentstyle=\color[rgb]{1,0,0},
  stringstyle=\color[RGB]{77,144,45},
  frame = single,
  morecomment=[l]{***},
  ndkeywords = {Int, Bool, String},
  inputencoding=latin1,
  keywords={in, load, pr, protecting, sort, sorts, op, ops, var, vars, eq, cq, ceq, endfm, fmod, is, mod, endm, load, =, ==, =/= } 
}

\maketitle

\begin{abstract}
  The introduction of automated deduction systems in secondary schools
  face several bottlenecks, the absence of the subject of rigorous
  mathematical demonstrations in the curricula, the lack of knowledge
  by the teachers about the subject and the difficulty of tackling the
  task by automatic means.

  Despite those difficulties we claim that the subject of automated
  deduction in geometry can be introduced, by addressing it in
  particular cases: simple to manipulate by students and teachers and
  reasonably easy to be dealt by automatic deduction tools.

  The subject is discussed by addressing four secondary
  schools geometry problems: their rigorous proofs, visual proofs,
  numeric proofs, algebraic formal proofs, synthetic formal proofs, or
  the lack of them. For these problems we discuss a lesson plan to
  address them with the help of Information and Communications
  Technology, more specifically, automated deduction tools.
 \end{abstract}

\section{Introduction}
\label{sec:introduction}

The introduction of automated deduction systems in secondary schools
face several bottlenecks, the absence of the subject, rigorous
mathematical demonstrations, not to mention formal proofs, in many of
the national curricula, the lack of knowledge (and/or training) by the
teachers about the subject~\cite{Santos2021} and the difficulty of
tackling the task by automatic means~\cite{Chou2001}, are the most
important in our opinion.
  
In the area of geometry there are now a large number of computational
tools that can be used to perform many different tasks, dynamic
geometry systems (DGS), computer algebra systems (CAS), geometry
automatic theorem provers (GATP) among
others~\cite{Quaresma2017}. These tools can be useful to address the
subject of proofs in secondary schools.

The subject of automated deduction has been progressing for many years
now and has already attained some remarkable progresses,\footnote{See
  the series of proceedings of the CADE conferences,
  \url{http://www.cadeinc.org/}} but there are many obstacles still to
be addressed when we speak about the use of geometry automated theorem
provers (GATP) in secondary school classes: many of the GATP were
specified and implemented for research purposes and their use and
outputs are designed to be understood by experts, not for a secondary
teacher/student; some of the methods (e.g. the algebraic methods) do
not produce a geometric proofs; many (if not all) of the synthetic
methods produce a proof, but in an axiomatic system different from the
usually used by the secondary teachers/students; many of the proposed
problems, e.g. those involving inequalities,\footnote{In another
  contribution to this volume, \emph{Symbolic comparison of geometric
    quantities in GeoGebra}, Zolt{\'a}n Kov{\'a}cs and R{\'o}bert
  Vajda, address this issue.} are still not addressed by the GATP; in
many cases the time needed for a GATP to produce an answer is to long
to be useful in a classroom scenario; last (but not the least), are
formal proofs what a secondary teacher/student need?

An answer to this last question is being given by interactive,
tutorial, deduction systems. From the secondary schools point of view, a generic fully automated theorem prover may not be the most
  appropriated choice, a tutorial system, with a minimum set of rules
  may be a better choice. What is needed is a system that allows the
  students to make conjectures and proofs, in natural language and with a
  set of rules, the closest possible with their usual practice. The tutorial systems are much close to provide such approach then the generic automated theorem provers.

\textit{Overview of the paper.} The paper is organised as follows:
first, in Section~\ref{sec:gatpsSecondaySchools},
~four secondary school geometry problems are introduced and the
development of proofs with the help of Information and Communications
  Technology (ICT) tools, is discussed. In
Section~\ref{sec:newApproaches} the case for a new approach, a
tailored, to the needs of secondary schools, axiom system approach is
discussed. In Section~\ref{sec:conclusions} final conclusions are
drawn. In appendix~\ref{sec:classPlan} a lesson plan for a classroom,
where automated deduction tools could be introduced, is given and
finally in appendix~\ref{sec:ICTproblem1} the details of the use of
the ICT tools for the problem 1 are shown.

\section{Use of GATP in Secondary Schools}
\label{sec:gatpsSecondaySchools}

In the recent book edited by Gila Hanna, David Reid, and Michael de
Villiers, \emph{Proof Technology in Mathe\-matics Research and
  Teaching}~\cite{Hanna2019} many issues related to the use of
automated deduction me\-thods and tools are discussed. In its four
parts: \emph{Strengths and limitations of automatic theorem provers};
\emph{Theoretical perspectives on computer-assisted proving};
\emph{Suggestions for the use of proof software in the classroom};
\emph{Classroom experience with proof software}, we have a
comprehensive coverage of the area. The present study it is intended
as a new contribution to the area, in a more hands-on approach.

In appendix~\ref{sec:classPlan} a lesson plan for the study of the
proofs in geometry is presented and in the following sections, four
problems, that could be the subject of the lesson plan, are presented
and the use of ICT tools for the development of their formal proofs is
discussed. For each of these problems its adequacy for the intended
goal is discussed.

\subsection{Dynamic Geometry and Automated Deduction Tools}
\label{sec:DGSandATP}

Dynamic geometry systems and geometry automated theorem provers are
tools that enable teachers and students to explore existing knowledge,
to create new constructions and conjecture new properties.  Dynamic
geometry systems allow building geometric constructions from free
objects and elementary constructions, it is possible to manipulate the
free objects (objects universally quantified), preserving the
geometric properties of the constructions. Moving the free points, on the one hand, we can conjecture that a given property is true.  Although those
manipulations are not formal proofs because only a finite set of
positions are being considered and because visualisation can be
misleading, they provide a first clue to the truthfulness of a given
geometric conjecture. On the other hand, geometry automated theorem
provers allow the development of formal proofs. Based on different
approaches (e.g. algebraic, synthetic) they allow its users to check
the soundness of a construction and also, in some cases, to create
formal proofs for a given geometric conjectures.

In the following we focus on some ICT tools that can be used to help
students and teachers in the exploration of geometric
proofs. \emph{GeoGebra}\footnote{\url{https://www.geogebra.org}} is a
very well-known DGS, but it also contains several GATP that can be
used under a portfolio
strategy~\cite{Botana2015a,Kovacs2015,Kovacs2020,Nikolic2018}. \emph{Java
  Geometry Expert}
(JGEX),\footnote{\url{https://github.com/yezheng1981/Java-Geometry-Expert}}
a GATP and DGS combo with a strong focus on the formal proofs engines:
the Wu, Gr{\"o}bner bases, area, full-angle and deductive database
methods. It is possible to add a conjecture to a given geometric
construction and ask for its proof with natural and visual language
renderings~\cite{Ye2011}. Geometry Constructions $\rightarrow$ LaTeX
Converter
(GCLC),\footnote{\url{http://poincare.matf.bg.ac.rs/~janicic/gclc/}}
a GATP, with a graphics engine, for the Wu, Gr{\"o}bner bases and area
methods. It is possible to add a conjecture to a given geometric
construction (with a graphical rendering) and ask for its proof with
natural language
rendering~\cite{Janicic2010,Janicic2006c}. \emph{Prover
  9}\footnote{\url{https://www.cs.unm.edu/~mccune/prover9/}} is a
generic automated theorem prover for first-order and equational logic.
It is not specific to geometry, but can be used in geometry, provided
a specific set of axioms (e.g. Tarski,
Full-angle~\cite{Chou2000,Quaife1989}), is given.

\subsection{The Four Geometric Problems}
\label{sec:TheFourProblems}

Using a DGS (\GeoGebra, \JGEX) it is always possible to dynamically
manipulate the geometric construction, getting a visual check for the
conjecture at hand. Both programs also allow to perform a numerical
check. In either cases we have checks, not proofs. For the next step
we need a GATP (incorporated in the DGS or not) or a generic ATP
(e.g. \ProverNine).

The GATP came in different forms, algebraic (\GeoGebra, \GCLC, \JGEX),
synthetic: by a geometric proof method (\GCLC, \JGEX); or by a logical
proof method (\ProverNine).

The algebraic provers (Wu's method, Gr{\"o}bner Basis method, etc.) are
not that useful in the secondary classroom. Most of the times the
output of a formal proof is only a yes/no answer, even when a proof
script is produced, it is too complex and not related with geometric
reasoning, to be used in a secondary classroom.

The geometric provers (area method, full-angle method, database
method, etc.) provide a geo\-me\-tric proof script, readable, but based in
axiomatic systems that are not the usually used in secondary
classrooms (see Appendix~\ref{sec:ICTproblem1}), so its usefulness
is diminished by that fact.

{\JGEX} has a very interesting feature, it links the formal proof with
the geometric construction with visual elements (colours, blinking
effects, etc.). It is not being developed at this moment, but given
that the code is now open source, available at
\emph{GitHub},\footnotemark[3] maybe its development can be resumed in the 
future.

Generic ATP, like \ProverNine, using an axiomatic system for
geometry,\footnote{The geometric deductive database method was used in
  the proof attempts in this paper} can also be used, the proof is
done by refutation using the resolution method, so it can be used by
teachers (maybe with the help of an expert) as a guide to construct 
a formal proof, to be rendered during a class, to the students.

\paragraph{\seccaonovaconjectura}
\label{sec:problem1}

\begin{theorem*} 
  Show that for any given convex quadrilateral, $[ABCD]$, that
  $[EFGH]$, where each of the points is the mid point of a segment in
  $[ABCD]$, is a parallelogram.

  \begin{figure}[hbtp]
    \centering
    \includegraphics[width=0.55\textwidth]{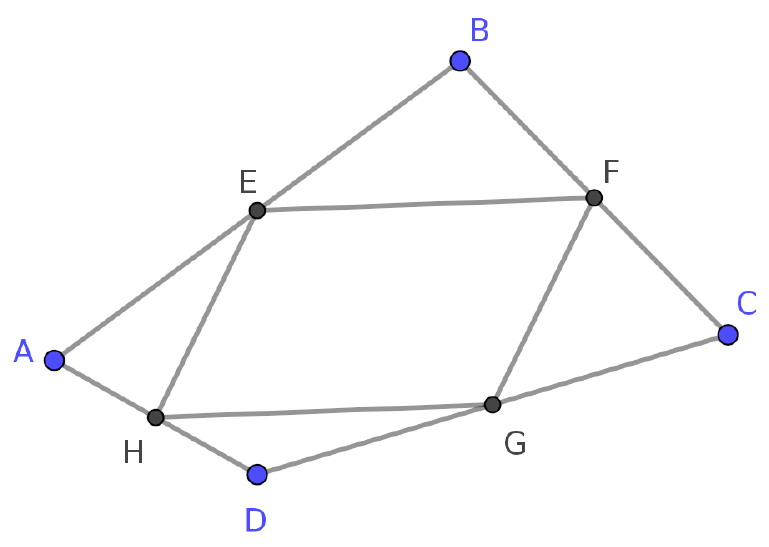}      
    \caption{\novaconjectura}
    \label{fig:convexquadrilateral}
  \end{figure}
   
\end{theorem*}

\begin{proof}
  Consider any convex quadrilateral $[ABCD]$, as shown in
  figure~\ref{fig:convexquadrilateral} where $E$, $F$, $G$ and $H$ are
  the midpoints of $[AB]$, $[BC]$, $[CD]$ and $[DA]$ respectively.
  Draw the diagonal $[AC]$ (see Figure~\ref{fig:parallegoram1}), so
  the triangles $\Delta ABC$ and $\Delta EBF$ are similar since they
  have a common angle, $\alpha = \angle ABC$, and from one to the
  other the two pairs of sides $[AB]$, $[EB]$ and $[BC]$, $[FB]$
  (which are contained on the sides of the angle) are directly 
  proportional, $\frac{|AB|}{|EB|} = \frac{|CB|}{|FB|}$
  (Side-Angle-Side case). 

  \begin{figure}[hbtp!]
    \centering
    \includegraphics[width=0.55\textwidth]{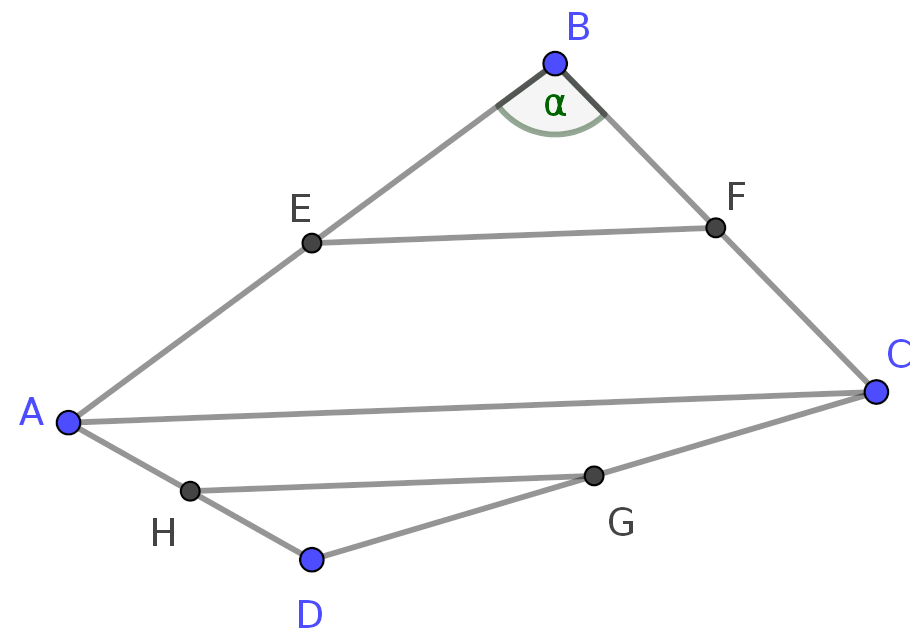}      
    \caption{Sides $[EF]$ and $[HG]$ are parallel}
    \label{fig:parallegoram1}
  \end{figure}

  Given that, the angles $\angle BAC$ and $\angle BEF$ are congruent and as these angles have a common side, then $[EF] \| [AC]$.
  
  Following an analogous reasoning, we have that $[AC] \| [GH]$, since
  the triangles $\Delta DHG$ and $\Delta DAC$ are similar.

  We have that $[EF] \| [AC]$ and $[AC] \| [GH]$ then by the
  transitive property of the parallelism relationship it can be
  concluded that $[EF] \| [GH]$.

  \begin{figure}[hbtp!]
    \centering
    \includegraphics[width=0.55\textwidth]{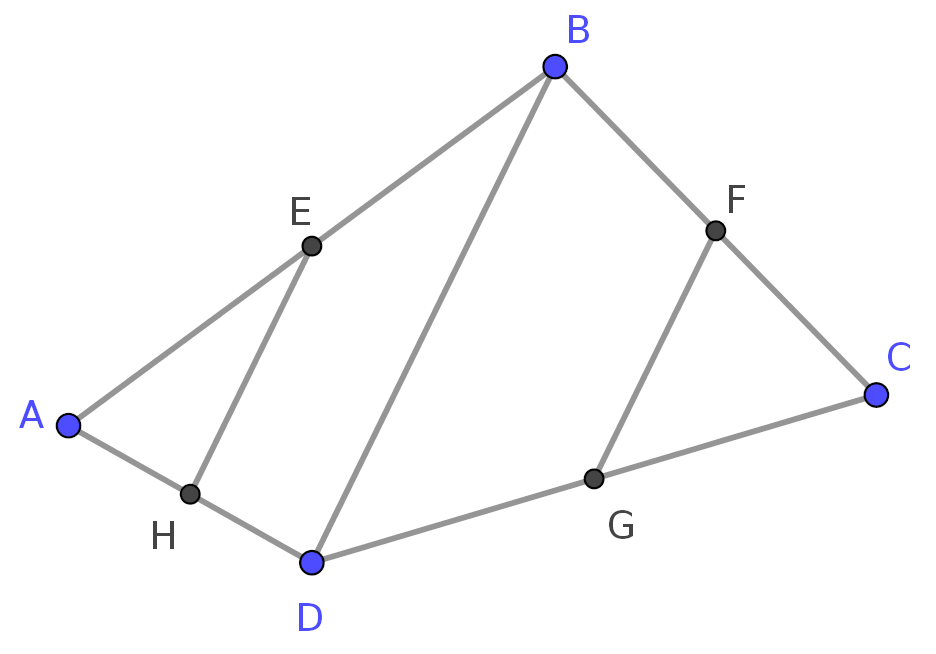}      
    \caption{Sides $[FG]$ and $[EH]$ are parallel}
    \label{fig:parallegoram2}
  \end{figure}
  
  Draw the diagonal $[BD]$ (see
  Figure~\ref{fig:parallegoram2}). Following a reasoning similar to
  the previous step it can be proved that $[FG] \| [EH]$.
  
  Since $[EF] \| [GH]$ and $[EH] \| [FG]$ it can be concluded that, the vertices of the midpoints of any quadrilateral convex are vertices of a parallelogram. 
\end{proof}

Formal proofs, done with the help of ICT tools:

\begin{description}
\item[GeoGebra] visual dynamic and numerical checks. A formal
  (algebraic) proof is possible, via the \textit{Prove} or
  \textit{ProveDetails} commands, with a yes/no answer.
\item[JGEX] visual dynamic and numerical checks. Geometry deductive
  database method; full-angle method; Gr{\"o}bner bases method; Wu's
  method (less than 1 second) --- formal proofs with visual helps.
\item[GCLC] area method, 0.001 seconds; Wu's method, 0.051 seconds;
  Gr{\"o}bner bases method 0.132 seconds --- with formal proof scripts.
\item[Prove9] geometry deductive database axioms, resolution method
  0.02 seconds --- formal proof script.
\end{description}

For this problem we have an array of choices, the {\GeoGebra} and the
{\JGEX} tools are the most complete: DGS; checks; formal proofs, in
one single package. {\JGEX} provides proof scripts and visual
aids. {\GeoGebra} is the best known by teachers and students. {\GCLC}
can also be considered, although it lacks the dynamic aspect and the
area method is based on an axiom system different from the usual used
in secondary schools.  {\ProverNine}, with a proper axiom system for
geometry, is capable of develop formal proofs, producing a readable
proof script. But it is a technical tool, only usable by experts. In
appendix~\ref{sec:ICTproblem1} the proofs done by the different ICT
tools are given in more detail.

\paragraph{\seccaonovaconjectura}
\label{sec:problem2}

\begin{theorem*} 
  Consider the convex quadrilateral $[ABCD]$, like the one represented
  in the figure~\ref{fig:quadrilatero2}, Assuming that $|BD|> |BC|$ and
  that $\alpha = \angle CAB > \angle ABC = \beta$, show that $|BD| > |AC|$. 
  
  \begin{figure}[hbtp]
    \centering
    \includegraphics[width=0.6\textwidth]{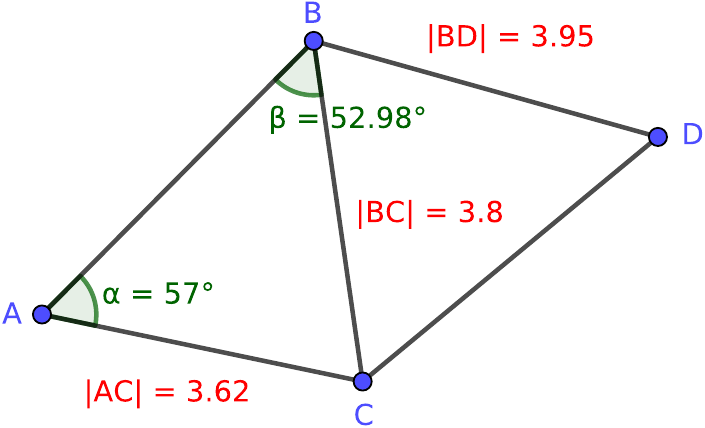}      
    \caption{\novaconjectura}
    \label{fig:quadrilatero2}
  \end{figure}
\end{theorem*}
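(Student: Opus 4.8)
The plan is to reduce the whole claim to a single triangle, namely $\Delta ABC$ obtained by drawing the diagonal $[AC]$ of the quadrilateral. The length $|BD|$ enters only through the hypothesis, so the argument splits into two independent facts that are afterwards chained by transitivity of the order on lengths. First I would isolate $\Delta ABC$ and, for each of the two given angles, carefully identify the side that lies opposite to it: the angle $\alpha = \angle CAB$ sits at vertex $A$, whose opposite side in the triangle is $[BC]$, while $\beta = \angle ABC$ sits at vertex $B$, whose opposite side is $[AC]$. Convexity of $[ABCD]$ guarantees that $\alpha$ and $\beta$ really are the interior angles of this triangle at $A$ and $B$, so the standard correspondence between angles and their opposite sides is legitimate.

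Next I would invoke the elementary theorem that in any triangle the greater angle is opposite the greater side (equivalently, the law of sines). Applied to $\Delta ABC$, the hypothesis $\alpha > \beta$ yields at once $|BC| > |AC|$, a comparison that lives entirely inside the triangle.

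Finally, I would bring in the remaining hypothesis $|BD| > |BC|$. Transitivity of the strict order then produces the chain $|BD| > |BC| > |AC|$, whence $|BD| > |AC|$, which is exactly the desired conclusion.

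The argument has no genuinely hard step: once the triangle is isolated it is essentially a one-line application of a basic result. The only places where care is needed—and the natural points to stress in a secondary-school setting—are the bookkeeping of which side is opposite which angle, and the observation that the given inequality $|BD| > |BC|$ is precisely the bridge that carries the triangle's internal comparison $|BC| > |AC|$ over to the target comparison involving the diagonal $|BD|$. Without that second hypothesis the triangle alone would establish only $|BC| > |AC|$ and would not suffice.
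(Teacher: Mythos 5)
Your proof is correct and follows essentially the same route as the paper: apply the greater-angle-opposite-greater-side theorem in $\Delta ABC$ to get $|BC| > |AC|$ from $\alpha > \beta$, then chain this with the hypothesis $|BD| > |BC|$ by transitivity. Your additional bookkeeping about which side is opposite which angle, and the remark on convexity, only make explicit what the paper's proof leaves implicit.
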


\begin{proof}
  Given that in any triangle, the angle with the greatest amplitude is
  opposed to the side with the longest length, we have that in the
  $\Delta ABC$, $|BC| > |AC|$.

  Given that $|BD| > |BC|$, by hypothesis and, $|BC| > |AC|$, just
  proved above, due to the transitive property of the '$>$' order
  relation of we have $|BD| > |AC|$.
\end{proof}

Proofs with the help of ICT tools:

\begin{description}
\item[GeoGebra] visual dynamic and numerical checks. Using {\GeoGebra}'s
  new tool, \textit{Discover}, it is possible a formal proof, but without
  a proof script.
\item[JGEX] visual dynamic and numerical checks.
\end{description}

The different formal proving approaches are not possible  (apart the
new algebraic approach built-in in GeoGebra) given that
the current axiomatic sys\-tems/meth\-ods do not deal with inequalities.

\paragraph{\seccaonovaconjectura}
\label{sec:problem3}

\begin{theorem*}
  Knowing that $BC \| FG$, $|BC| = |CD|$ and $BD \|
  EF$ (see Fig.~\ref{fig:paralelas}), show that $G$ is equidistant
  from $E$ and $F$.

  \begin{figure}[hbtp]
    \centering
    \includegraphics[width=0.85\textwidth]{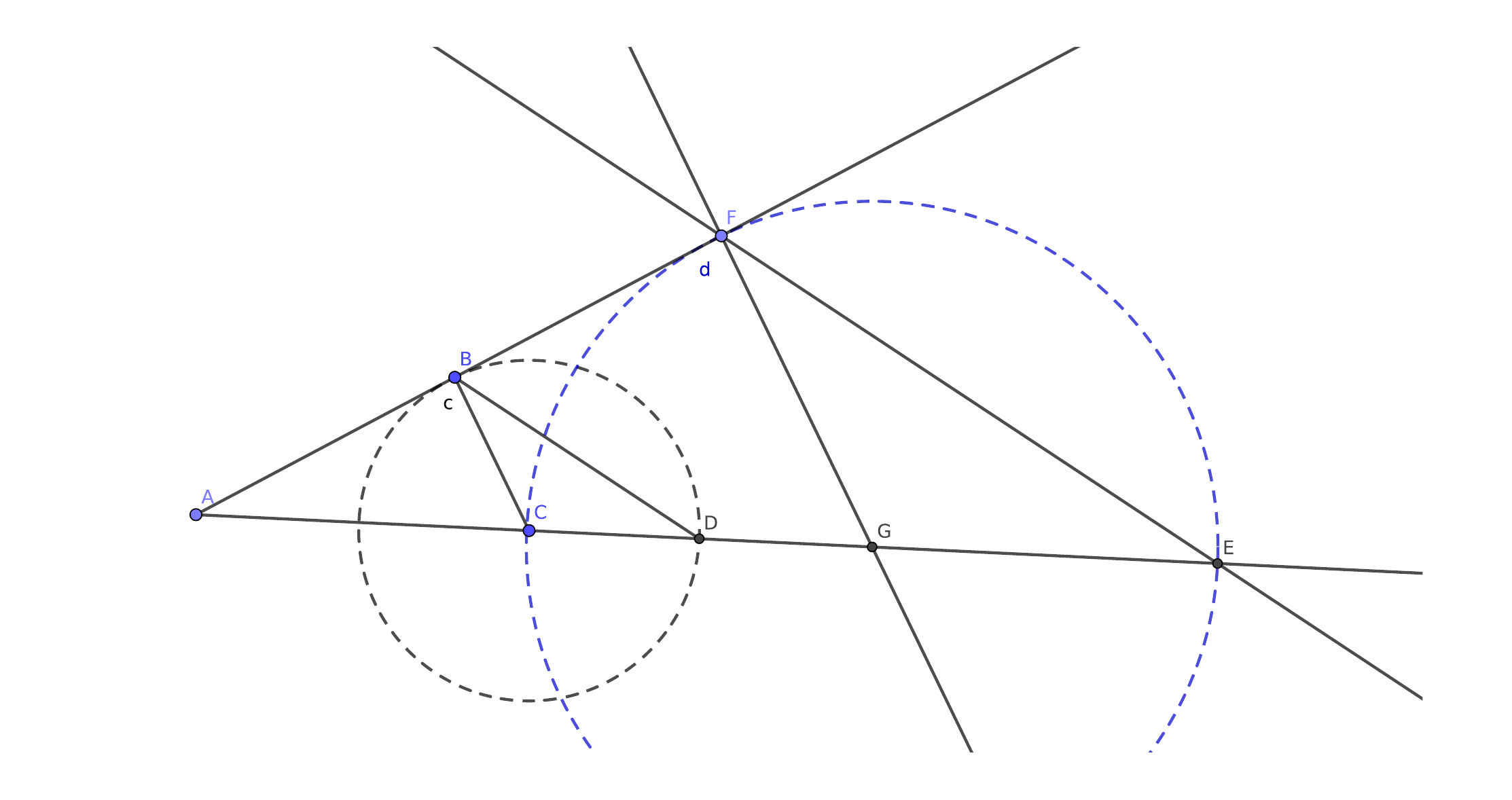}      
    \caption{\novaconjectura}
    \label{fig:paralelas}
  \end{figure}
\end{theorem*}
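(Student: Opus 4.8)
The plan is to prove the equivalent statement that $\triangle GEF$ is isosceles with apex $G$, i.e. that its base angles at $E$ and $F$ are equal. The two length and parallelism hypotheses strongly suggest comparing $\triangle GEF$ with the triangle $CBD$: since $|BC| = |CD|$, the triangle $CBD$ is isosceles with apex $C$, so its base angles satisfy $\angle CBD = \angle CDB$. The whole idea is to transport this equality of angles across the parallel lines into $\triangle GEF$, so that its base angles at $E$ and $F$ come out equal and therefore $|GE| = |GF|$.

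First I would fix the correspondence $G \leftrightarrow C$, $F \leftrightarrow B$, $E \leftrightarrow D$, which is the one forced by the data: $BC \parallel FG$ pairs the side $FG$ with $BC$ (so $F \leftrightarrow B$, since $G \leftrightarrow C$), and then $BD \parallel EF$ pairs $EF$ with $BD$ consistently ($E \leftrightarrow D$). Using the corresponding/alternate angle theorem for a transversal crossing two parallel lines, the pair $FG \parallel BC$ together with $FE \parallel BD$ yields that the angle $\angle GFE$ (between $FG$ and $FE$) equals the angle $\angle CBD$ (between $BC$ and $BD$): this matches the first base angle.

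Next I would try to match the second base angle, $\angle GEF = \angle CDB$, in the same fashion; combined with the previous step and the isosceles equality $\angle CBD = \angle CDB$, this would give $\angle GFE = \angle GEF$, whence $\triangle GEF$ is isosceles and $|GE| = |GF|$. Equivalently, the argument can be packaged as a similarity $\triangle GFE \sim \triangle CBD$; once that similarity is in hand, $|GE| = |GF|$ follows immediately from $|CB| = |CD|$. A clean algebraic alternative is to set $\vec{FG} = \lambda\,\vec{BC}$ and $\vec{EF} = \mu\,\vec{BD}$ (legitimate by the two parallelisms), note $\vec{GE} = -\vec{FG} - \vec{EF}$, and use that $|BC| = |CD|$ is equivalent to $\vec{BC}\cdot\vec{BD} = \tfrac{1}{2}\,|BD|^2$; expanding gives $|GF|^2 - |GE|^2 = -\mu\,(\lambda+\mu)\,|BD|^2$, which vanishes precisely when $\lambda = -\mu$.

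The hard part will be supplying the one relation that the two stated parallelisms alone do \emph{not} provide. Two pairs of respectively parallel sides fix only the single equal angle $\angle GFE = \angle CBD$ (the angle at the matched vertices $F \leftrightarrow B$), which is not enough to force the second base-angle equality; in the vector version this is exactly the reason $|GF|^2 - |GE|^2$ collapses only under the extra condition $\lambda = -\mu$. The construction in Figure~\ref{fig:paralelas} must therefore contribute the missing datum — most naturally that the remaining sides are parallel too, $GE \parallel CD$, or the equivalent intercept-theorem proportionality $|EF| : |BD| = |FG| : |BC|$ that pins $G$ down on the line through $F$ parallel to $BC$. Once that incidence is read off the figure, the rest is routine bookkeeping: checking that each parallel pair is crossed by the correct transversal so that equal (rather than supplementary) angles are being matched.
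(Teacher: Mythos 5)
Your proposal is correct and follows essentially the same route as the paper: both arguments establish the similarity $\Delta FGE \sim \Delta BCD$ (your correspondence $F\leftrightarrow B$, $G\leftrightarrow C$, $E\leftrightarrow D$) and then transfer the isosceles hypothesis $|BC|=|CD|$ across the ratio $\frac{|FG|}{|BC|}=\frac{|EG|}{|CD|}$ to conclude $|FG|=|EG|$. The missing incidence datum you correctly flagged as indispensable is exactly what the figure supplies: $C$, $D$, $E$, $G$ all lie on one line $AE$, which serves as a common transversal for both pairs of parallels, so both base-angle equalities ($\angle BDC \cong \angle FEG$ and $\angle BCD \cong \angle FGE$) follow as corresponding angles and the AA similarity is immediate.
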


\begin{proof}
  $\angle BDC \cong \angle FEG$, since they are corresponding angles
  of two parallel lines ($BD \| EF)$ crossed by another line ($AE$).

  Similarly $\angle FGE \cong \angle BCD$, since they are corresponding angles
  of two parallel lines ($BC \| FG)$ crossed by another line ($AE$).

  Given that, $\Delta [BCD]$ e $\Delta [FGE]$ are similar (case
  angle-angle). In this way, we have:
  \begin{equation}
    \label{eq:similartriangles}
    \frac{|FG|}{|BC|} = \frac{|EG|}{|CD|}
  \end{equation}
  Since $C$ is equidistant from $B$ and $D$ we have that $|BC|=|CD|$,
  from this and equation~\ref{eq:similartriangles}, it can be
  concluded that $|FG|=|EG|$, so $G$ is equidistant from $E$ and $F$.
\end{proof}

Proofs with the help of ICT tools:

\begin{description}
\item[GeoGebra] visual dynamic and numerical checks. No formal proof
  given  that the \texttt{Prove} command do not have the equal length
  conjecture. 
\item[JGEX] visual dynamic and numerical checks. Geometry deductive
  database method, full-angle method; Wu's method; Gr{\"o}bner basis
  method (less than 1 second) --- formal proofs with visual helps.
\item[GCLC] area method, not proved (time limit); Wu's method, 0.005;
  Gr{\"o}bner basis method, 0.005.
\item[Prove9]  geometry deductive database axioms, resolution method
  0.04 seconds --- formal proof script.
\end{description}

Again, an array of choices, the {\JGEX} tool is the most complete:
DGS; checks; formal proofs, in one single package. {\JGEX} provides
proof scripts and visual aids.

\paragraph{\seccaonovaconjectura}
\label{sec:problem4}

\begin{theorem*}
  Show that the sum of the amplitudes of the internal angles of a
  triangle is $180^{\mathrm{o}}$.

  \begin{figure}[hbtp!]
    \centering
    \includegraphics[width=0.6\textwidth]{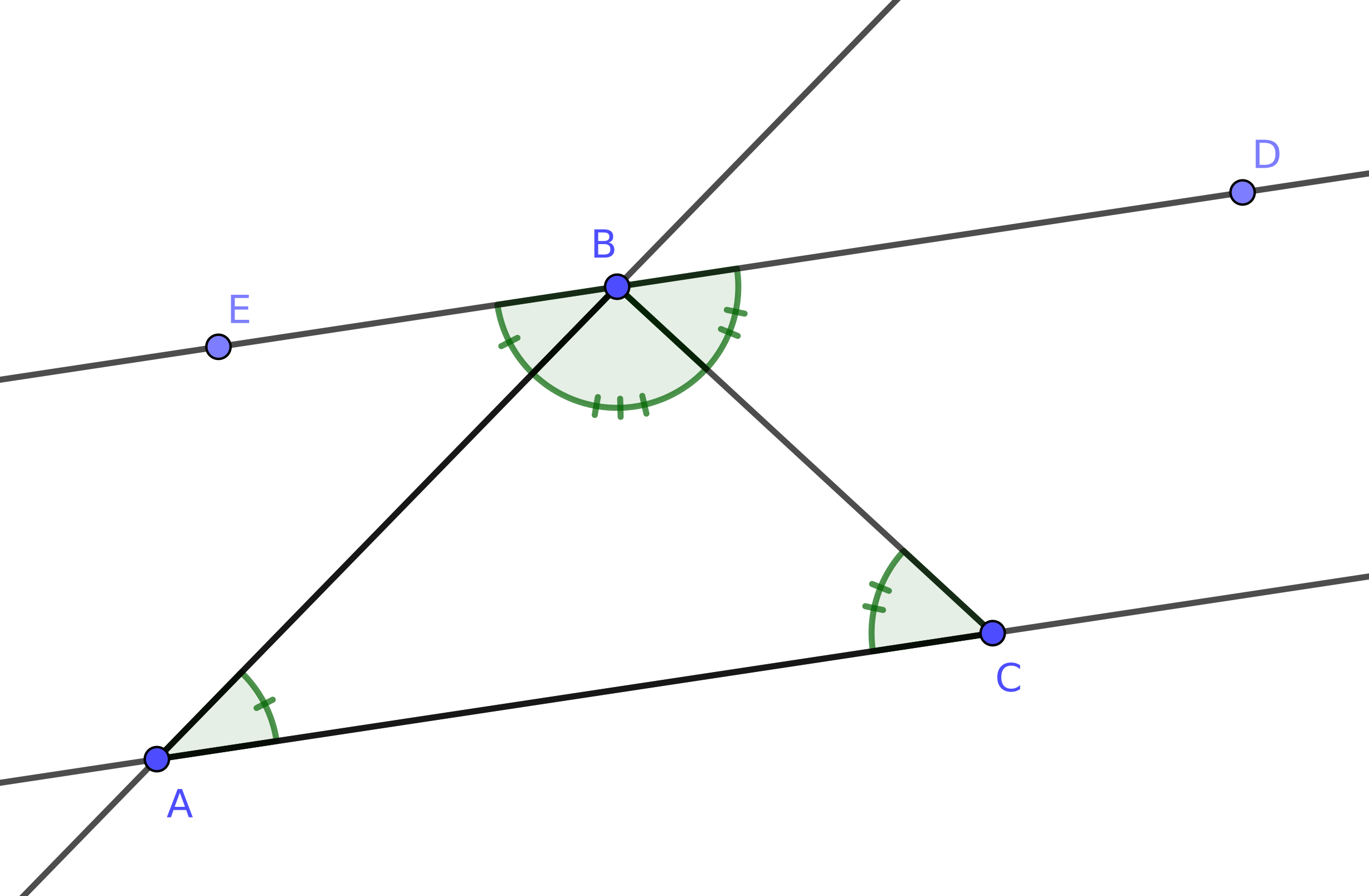}      
    \caption{\novaconjectura}
    \label{fig:problema5}
  \end{figure}
\end{theorem*}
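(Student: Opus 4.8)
The plan is to reduce the statement to the elementary fact that the angles lying along a straight line add up to a straight angle, by introducing a single auxiliary construction. Consider a triangle $\Delta ABC$ with internal angles $\alpha = \angle BAC$, $\beta = \angle ABC$ and $\gamma = \angle BCA$. Through the vertex $A$ I would draw the line $r$ parallel to the opposite side $[BC]$, as suggested in Figure~\ref{fig:problema5}. This line splits the straight angle at $A$ into three consecutive angles: the middle one is $\alpha$ itself, and the two outer ones are the angles that $r$ makes with the sides $[AB]$ and $[AC]$.

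The key steps, in order, would then be the following. First, observe that $[AB]$ is a transversal cutting the two parallel lines $r$ and $BC$, so the angle between $r$ and $[AB]$ on the appropriate side is congruent to $\beta$ (alternate interior angles) --- exactly the kind of parallel-lines-cut-by-a-transversal argument already used in Problems~\ref{sec:problem1} and~\ref{sec:problem3}. Second, apply the same reasoning with $[AC]$ as transversal to conclude that the other outer angle is congruent to $\gamma$. Third, note that the three consecutive angles at $A$ rest on the straight line $r$ and therefore sum to $180^{\mathrm{o}}$. Substituting the two congruences yields $\alpha + \beta + \gamma = 180^{\mathrm{o}}$, which is the claim.

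The hard part will not be the synthetic argument itself --- it is three lines once the parallel is drawn --- but rather what it rests upon. The existence and uniqueness of the parallel $r$ through $A$ is the Euclidean parallel postulate, and in fact the $180^{\mathrm{o}}$ angle sum is logically \emph{equivalent} to it, so the theorem cannot be established without committing to Euclidean geometry; this is a point worth making explicit to students. From the automated-deduction side, I expect the genuine obstacle to be that the conclusion is an equality between a sum of angle \emph{measures} and a fixed constant, rather than a congruence, parallelism, or equidistance relation. As the paper notes for the inequality Problem~\ref{sec:problem2}, the axiomatic systems and methods behind most GATP are tuned to relational conclusions and do not natively carry the metric notion of a straight angle, so rendering this particular result through the same tools is likely to be the real difficulty.
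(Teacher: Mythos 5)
Your proof is correct and is essentially the paper's own argument: the paper likewise draws a line through one vertex (there $B$, parallel to $AC$; you use $A$, parallel to $BC$), transfers the other two angles by alternate interior angles, and concludes from the straight angle --- the vertex choice is mere relabeling. Your closing remarks also match the paper's observation that current GATP axiomatic systems/methods cannot handle the angle-sum conclusion, so no formal automated proof is offered for this problem.
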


\begin{proof}
  A line is drawn parallel to $AC$ and containing the vertex $B$. Let
  $D$ and $E$ be two points in that line. $\angle BAC \cong \angle EBA$
  given that they are alternate angles. Similarly
  $\angle BCA \cong \angle CBD$. The sum of the amplitudes of the
  $\angle CBD$, $\angle CBA$ and $\angle ABE$ angles is $180^{\mathrm{o}}$.

  Thus, the sum of the interior angles of a triangle is $180^{\mathrm{o}}$.
\end{proof}

Proofs with the help of ICT tools:

\begin{description}
\item[GeoGebra] visual dynamic and numerical checks. 
\item[JGEX] visual dynamic and numerical checks. 
\end{description}

The different formal proving approaches are not possible given that
the current axiomatic systems and/or methods do not deal with the sum of
angles.

\section{A Tailored Axiom System Approach}
\label{sec:newApproaches}

It seems that, more than a generic (research) tool that can prove
everything (or try to), a rule based approach will be more
appropriate.  Rule based approaches explore the possibility of
building a sound, not necessarily complete, axiom system. The idea is
to have a minimal set of axioms, lemmas, and rules of inference that
can characterize a given sub-area of geometry.  This approach has
already been tried in tutorial systems like the
\emph{QED-Tutrix}~\cite{Font2018} and
\emph{iGeoTutor}~\cite{Wang2015}. For the goal of introducing
automated deduction in secondary schools, it seems the appropriated
approach to pursue.

\subsection{Deductive Databases in Geometry}
\label{sec:gddm}

A deductive database is a database in which new facts may be derived
from facts that were explicitly introduced~\cite{Gallaire1984}. In a
deductive database the elementary facts (axioms and lemmas) are facts
in the database and each n-ary predicate is associated with a
n-dimensional relation. A data-based search strategy is usually used,
that is, from the old facts, new facts are deduced. 
 
When considering the geometry deductive database method, the initial
construction is used to set the initial new-fact-list, and all the
axioms and lemmas are converted in tables and relations. The proof
will cycle, deriving new facts from the initial list, until no new
fact is derived~\cite{Chou2000}.

Apart the rules of full-angle method ~\cite{Chou1996d,Chou2000} (see
Table~\ref{tbl:gddm}, for an excerpt) it is possible to add high-level
lemmas to have formal proofs that are more close to the practice in
secondary schools.

\begin{table}[htbp!]
\begin{lstlisting}[language=GDDM,basicstyle=\footnotesize,frame=single]
fof(ruleD1,axiom,( ! [A,B,C] : (coll(A,B,C) => coll(A,C,B)))).
fof(ruleD2,axiom,( ! [A,B,C] : (coll(A,B,C) => coll(B,A,C)))).
fof(ruleD3,axiom,( ! [A,B,C,D] : 
                     ((A!=B & coll(A,B,C) & coll(A,B,D)) => coll(C,D,A)))).
fof(ruleD4,axiom,( ! [A,B,C,D] : (para(A,B,C,D) => para(A,B,D,C)))).
fof(ruleD5,axiom,( ! [A,B,C,D] : (para(A,B,C,D) => para(C,D,A,B)))).
(...)
\end{lstlisting}
  \caption{Full-angle Rules}
  \label{tbl:gddm}
\end{table}

\subsection{Maude Equational (and Rewriting) Logic Programming}
\label{sec:maude}

The \emph{Maude system} is an implementation of rewriting logic. Using
\emph{Maude} it is possible to implement a given axiomatic for
geometry, e.g. Tarski's axiom system~\cite{Quaife1989} (see
Table~\ref{tbl:maude}, for an excerpt). Like in the deductive
databases approach it is possible to add high-level lemmas.

\begin{table}[htbp!]
\begin{lstlisting}[language = Maude,frame=single,basicstyle=\footnotesize]
*** System Tarski over G3cp
fmod FORMULA is
pr QID .  *** Maude Qualified Identifiers.
sorts Prop Formula Point Segments. *** Atomic propositions, Formulas, Points, Segments
subsort Qid < Prop < Formula .
subsort Qid < Point .
*** Tarski geometry primitive relations
op p : -> Point [ctor] .
op pl : -> Point [ctor] .
op pll : -> Point [ctor] .
op _*_ : Point Point -> Segment [ctor comm] .
op betweenness : Point Point Point -> Prop [ctor]  .
op equidistance : Segment Segment -> Prop [ctor comm] .
op innerPasch : Point Point Point Point Point -> Point .
endfm
mod Tarski is
*** Tarski Geometry (Art Quaife (1989), JAR 5, 97--118.
*** A7 Inner Pasch
rl [ip1] : C , betweenness(U,V,W), betweenness(Y,X,W) |-- betweenness(V,innerPasch(U,V,W,X,Y),Y) , Cl => proved .
rl [ip2] : C , betweenness(U,V,W), betweenness(Y,X,W) |-- betweenness(X,innerPasch(U,V,W,X,Y),U) , Cl => proved .
\end{lstlisting}
  \caption{Tarski's Axioms in Maude}
  \label{tbl:maude}
\end{table}

\subsection{Tutorial Systems}
\label{sec:tutorial}

A tutor system consists of an artificial tutor that accompanies the
student in solving problems, complementing the teacher's work. A tutor
system builds a profile for each student and estimates their level of
knowledge, allowing the system to change the tutoring in real time, adjusting
it in order to interact more effectively with the student. Along the
years there were a few proposals in the field of geometry, such as
\emph{Advanced Geometry Tutor} (AGT)~\cite{matsuda2005},
\textit{AgentGeom}~\cite{cobo2007}, \textit{Baghera}
project~\cite{balacheff2003},
\textit{Cabri-G{\'e}om{\`e}tre}~\cite{luengo2005}, \textit{Geometry
  Explanation Tutor}~\cite{aleven2002},
\textit{geogebraTUTOR}~\cite{richard2007},
\emph{PCMAT}~\cite{martins2011} and Tutoriel Intelligent en G{\'e}om{\'e}trie
(\textit{TURING})~\cite{richard2007a}.

Close to our goal of a tailored axiom system approach is the
\emph{QED-tutrix} tutoring system. It was based on the \emph{geogebraTUTOR}
and \emph{TURING} systems. It has an automated deduction mechanism
running in the background and can assists the student in an
exploratory approach when solving geometry proofs.

\emph{QED-tutrix} is a system that guides the student, like a teacher,
through the learning process, weaving proofs in Euclidean
geometry. It is an interactive tool that guides students without
imposing restrictions on the order of their actions. It is an
intelligent tutoring system which assists students in proof solving by
providing hints while taking into account the student's cognitive
state.  The \emph{QED-tutrix}, using a automated deduction based on
the \emph{Prolog} rule-based logical query mechanism, builds the
\emph{Hypothesis, Properties, Definitions, Intermediate results and
  Conclusion graph} (HPDIC-graph). The HPDIC-graph contains all
possible proofs for a given problem, using a given set of axioms.  The
\emph{QED-tutrix} deduction engines uses a set of 707 properties and
definitions that were translated into inferences. No claim of
completeness is made, the only concern is soundness and the production
of proofs that are close to the practice of teachers and students in
secondary schools. The HPDIC-graph is constructed by forward chaining,
in a time-consuming process but, after being built it can be used to
provide the next step guidance that is expected to be given by
tutorial systems~\cite{Font2018,leduc2016}.

\section{Conclusions and Future Work}
\label{sec:conclusions}

The dynamic geometry systems already proved themselves in the classroom,
substituting the ``old'' ruler-and-compass construction. The usefulness
of deduction tools, for making conjectures and proving them, is still
to be established. Part of the problem may be in the curricula and the
lack of preparation of the teachers for the subject~\cite{Santos2021}.
The GATP themselves, are still more research tools, than tools that
can be used by a non-expert user, questions of efficiency and
readability of the proofs are to be addressed before a more wider use
can be considered.  Last but unfortunately, not in any means the
least, the complexity of the problem itself, proving, rigorously of
formally, it is a difficult task.

\paragraph*{Acknowledgement} The initial research had the help of
Carlos Reis from the Agrupamento de Escolas Lima de Freitas, Set{\'u}bal,
Portugal, \texttt{c.j.c.reis@gmail.com}.


\newcommand{\noopsort}[1]{}\newcommand{\singleletter}[1]{#1}

\appendix

\section{Lesson Plan for Axiomatic Plane Geometry}
\label{sec:classPlan}

\paragraph{Contents:} 

Euclid's Postulates:
\begin{description}
\item[Axiom I] A straight line may be drawn from any one point to any
  other point.
\item[Axiom II] Given two distinct points, there is a unique line that
  passes through them.
\item[Axiom III] A  terminated  line  can  be  produced  indefinitely.
\item[Axiom IV] A  circle  can  be  drawn  with  any  centre  and  any
  radius.
\item[Axiom V] That all right angles are equal to one another.
\item[Axiom VI] Through a given point $P$ not on a line $L$, there is
  one and only one line in the plane of $P$ and $L$ which does not
  meet $L$ (Playfair's version).
\end{description}

\medskip
Triangle congruence lemmas:
\begin{description}
\item[AAS] If two angles and a non-included side of one triangle are
  congruent to two angles and a non-included side of a second triangle,
  then the triangles are congruent.
\item[SAS] If two sides and the included angle of one triangle are
  congruent to two sides and the included angle of a second triangle,
  then the triangles are congruent.
\item[ASA] If two angles and the included side of one triangle are
  congruent to two angles and the included side of a second triangle,
  then the triangles are congruent.
\end{description}

\medskip
Two parallel or non-parallel lines are intersected by a transversal
lemmas.

\medskip
When two parallel lines are cut by a transversal, then:
\begin{description}
\item[Alternate Interior] the resulting alternate interior angles are
  congruent.
\item[Corresponding Angles] the resulting corresponding angles are congruent.
\item[Consecutive Interior] the pairs of consecutive interior angles
  formed are supplementary.
\end{description}

\paragraph{Goals:} To learn the Euclid's' Postulates. Learn how to
develop proofs, using Euclid's' postulates and lemmas about triangle
congruence and alternate interior, corresponding and consecutive
interior angles lemmas. Prove geometric conjectures using ICT tools.

\paragraph{Prerequisites.} Know the relative position of two lines in
the plane. 

\paragraph{Actions to Develop with the Students.} State Euclid's
axioms. Perform manual demonstrations. Perform formal (automatic)
proofs using appropriate software.

\paragraph{Material.} Paper and pen; calculator; worksheet; computer.

\paragraph{Summary.} Geometric Axioms and Lemmas. Manual and automatic
proofs.

\paragraph{Class Development.} The teacher explain Euclid's 5 axioms
and the lemmas. The teacher performs a worksheet with a manual
proof. The teacher does the proof manually and with the help of ICT
software.

\paragraph{Assessment of Learning.} The teacher requests, in writing,
that the students interpret the automatic demonstration, comparing it
with the manual demonstration. At the end of the class deliver the
conclusions drawn, for correction.

\section{Problem 1 --- Using Automated Deduction Tools}
\label{sec:ICTproblem1}

In this appendix a short presentation of the resolution of problem 1,
using the different automated deduction tools, described above, is
presented. If you are interested in all the files used in the
resolutions of problems 1 -- 4, please contact the authors.

\subsection{GeoGebra}
\label{sec:problem1geogebra}

In figure~\ref{fig:geogebraproblem1}, the blue dots, are
free points, we can use them to manipulate the construction, getting a
first, visual confirmation, of the truthfulness of the property. By
the use of the \texttt{Prove} command, we get a formal proof of the
conjecture, although without any proof script.

\begin{figure}[htbp!]
  \begin{center}
    \includegraphics[width=0.75\textwidth]{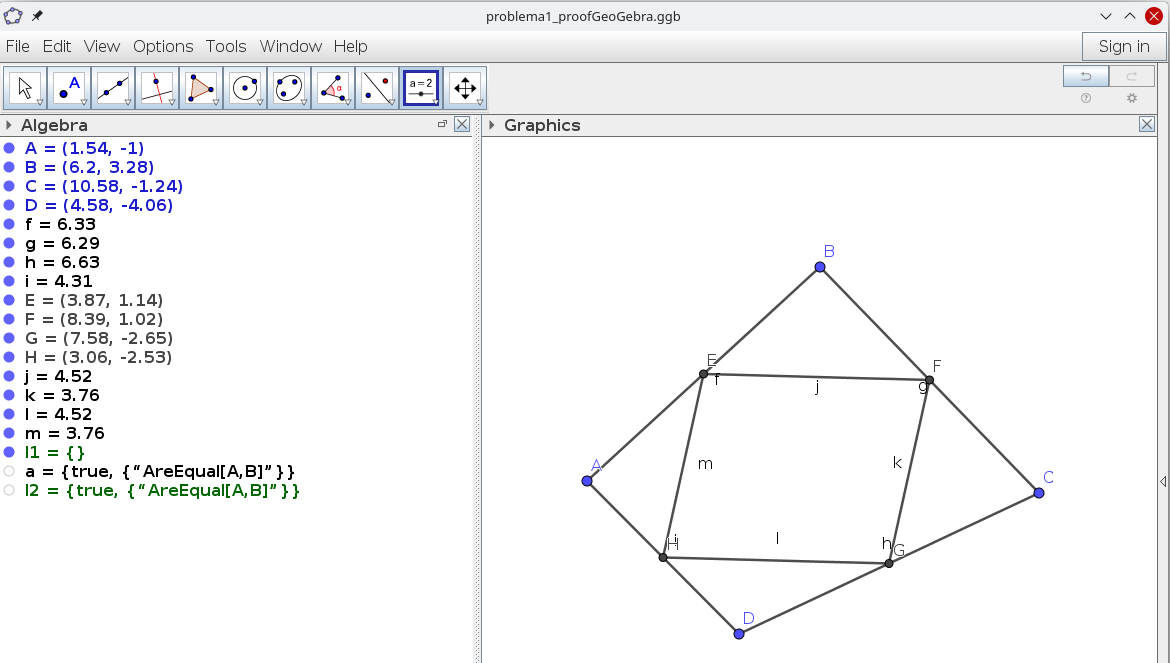}
  \end{center}
  \caption{{\GeoGebra} --- Problem 1}
  \label{fig:geogebraproblem1}
\end{figure}

\subsection{JGEX}
\label{sec:jgex}

As can be seen in figure~\ref{fig:jgexproblem1}, with {\JGEX} it is
possible to have the visual checks, by moving the free points around,
but also a formal proof, in this case using the geometry deductive
database (GDD) method. It is possible to see the connection
established between the construction and its rendering, and also
between the proof and its visual rendering.

\begin{figure}[htbp!]
  \begin{center}
    \includegraphics[width=0.45\textwidth]{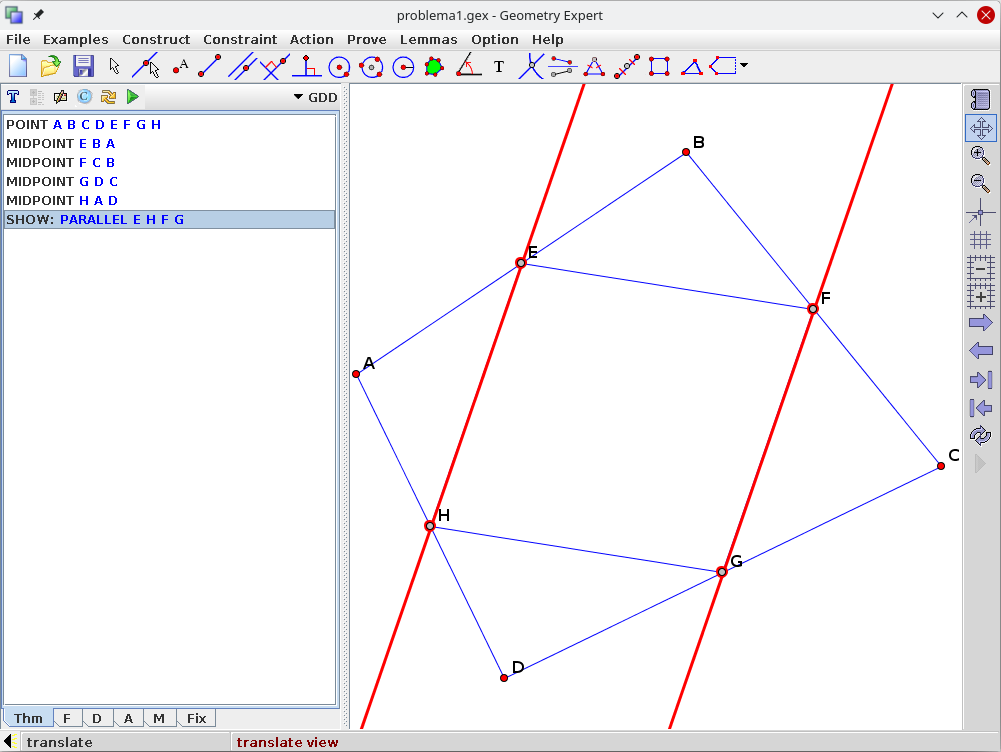}\quad
    \includegraphics[width=0.45\textwidth]{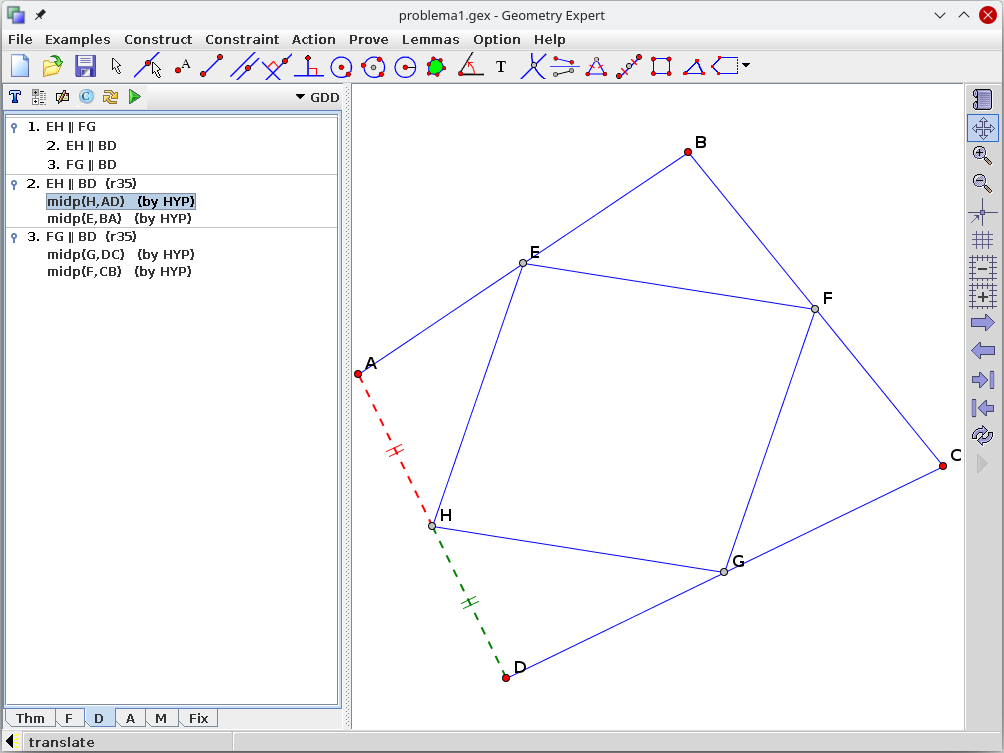}
  \end{center}
  \caption{{\JGEX} --- Problem 1 --- The conjecture (left), The
    Proof (right)}
  \label{fig:jgexproblem1}
\end{figure}

Unfortunately {\JGEX} it is not being developed at this moment, but given that
its authors took the decision of going open source, making available
all the code in a \emph{GitHub}
repository\footnote{\url{https://github.com/yezheng1981/Java-Geometry-Expert}}
it is possible that its development is resumed in the future.

\subsection{GCLC}
\label{sec:gclc}

{\GCLC} is more a GATP than a DGS. The geometric construction is
written in the \emph{GCL} language, it is rendered graphically, but
there are no free points that can be moved around, enabling a visual
verification of a given conjecture.

\begin{figure}[htbp!]
  \begin{center}
    \includegraphics[width=0.75\textwidth]{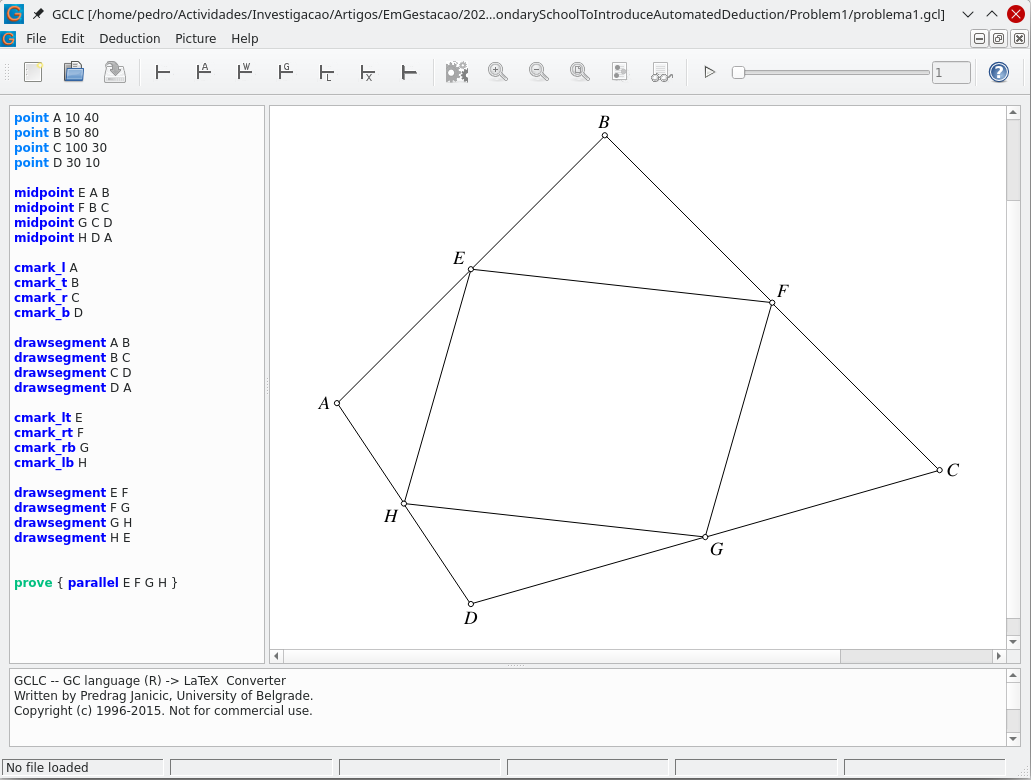}
  \end{center}
  \caption{GCLC --- Problem 1}
  \label{fig:gclcproblem1}
\end{figure}

Selecting one of its embedded GATP and processing the file, a
readable proof script (in \LaTeX) is produced. Unfortunately, 
readable here means, readable by experts, given that it is an
algebraic proof (Wu's method or Gr{\"o}bner basis method) or it is a
synthetic proof, but using a non-standard axiom system (area
method). In table~\ref{tbl:areamethodproblem1} an excerpt of the proof
produced by {\GCLC} area method prover is shown.

\setcounter{equation}{0}

\begin{table}[htbp!]
  \begin{displayproof}\footnotesize
    \proofstep{ S_{EGH}}{=}{ S_{FGH}}{by the statement}{-812.500000=-812.500000}
    \proofstep{ \left( S_{EGD} +  \left(\frac{1}{2}  \cdot  \left( S_{EGA} +  \left( -1  \cdot  S_{EGD}\right)\right)\right)\right)}{=}{ S_{FGH}}{by Lemma 29 (point $H$ eliminated)}{-812.500000=-812.500000}
    \proofstep{ \left( \left(\frac{1}{2}  \cdot  S_{EGD}\right) +  \left(\frac{1}{2}  \cdot  S_{EGA}\right)\right)}{=}{ S_{FGH}}{by algebraic simplifications}{-812.500000=-812.500000}
    \proofstep{ \left( \left(\frac{1}{2}  \cdot  S_{EGD}\right) +  \left(\frac{1}{2}  \cdot  S_{EGA}\right)\right)}{=}{ \left( S_{FGD} +  \left(\frac{1}{2}  \cdot  \left( S_{FGA} +  \left( -1  \cdot  S_{FGD}\right)\right)\right)\right)}{by Lemma 29 (point $H$ eliminated)}{-812.500000=-812.500000}
    \proofstep{ \left( \left(\frac{1}{2}  \cdot  S_{DEG}\right) +  \left(\frac{1}{2}  \cdot  S_{AEG}\right)\right)}{=}{ \left( S_{DFG} +  \left(\frac{1}{2}  \cdot  \left( S_{AFG} +  \left( -1  \cdot  S_{DFG}\right)\right)\right)\right)}{by geometric simplifications}{-812.500000=-812.500000}
    \proofstep{ \left( S_{DEG} +  S_{AEG}\right)}{=}{ \left( S_{DFG} +  S_{AFG}\right)}{by algebraic simplifications}{-1625.000000=-1625.000000}
    \proofstep{ \left( \left( S_{DEC} +  \left(\frac{1}{2}  \cdot  \left( S_{DED} +  \left( -1  \cdot  S_{DEC}\right)\right)\right)\right) +  S_{AEG}\right)}{=}{ \left( S_{DFG} +  S_{AFG}\right)}{by Lemma 29 (point $G$ eliminated)}{-1625.000000=-1625.000000}
    \proofstep{ \left( \left( S_{DEC} +  \left(\frac{1}{2}  \cdot  \left( 0  +  \left( -1  \cdot  S_{DEC}\right)\right)\right)\right) +  S_{AEG}\right)}{=}{ \left( S_{DFG} +  S_{AFG}\right)}{by geometric simplifications}{-1625.000000=-1625.000000}
    \proofstep{ \left( \left(\frac{1}{2}  \cdot  S_{DEC}\right) +  S_{AEG}\right)}{=}{ \left( S_{DFG} +  S_{AFG}\right)}{by algebraic simplifications}{-1625.000000=-1625.000000}
    \proofstep{ \left( \left(\frac{1}{2}  \cdot  S_{DEC}\right) +  \left( S_{AEC} +  \left(\frac{1}{2}  \cdot  \left( S_{AED} +  \left( -1  \cdot  S_{AEC}\right)\right)\right)\right)\right)}{=}{ \left( S_{DFG} +  S_{AFG}\right)}{by Lemma 29 (point $G$ eliminated)}{-1625.000000=-1625.000000}
  \end{displayproof}
  \caption{GCLC, Area Method Proof (excerpt) --- Problem 1}
  \label{tbl:areamethodproblem1} 
\end{table}

The proof is 4 pages long, with 35 steps. The prover gives the
following information at the end of the proof. ``Q.E.D.'' (or not proved,
if that was the case) . There are no ndg (non-degenerated)
conditions. Number of elimination proof steps: 12. Number of geometric
proof steps:       21. Number of algebraic proof steps:
72. Total number of proof steps:          105. Time spent by the
prover: 0.005 seconds.

\subsection{Prover9}
\label{sec:prover9}

To be able to use {\ProverNine} the full-angle method was converted to
first-order form, TPTP/FOF syntax~\cite{Sutcliffe2017} and the
geometric conjectures were also written in that language. A specific
filter was used to convert that to {\ProverNine} internal
syntax.

\begin{lstlisting}[language = FOF,frame=single,basicstyle=\footnotesize]
% ---Include Geometry Deductive Database Method axioms
include('geometryDeductiveDatabaseMethod.ax').

fof(tgtpproblema1,conjecture,( ! [ A,B,C,D] : 
   ( midp(E,A,B) &  midp(F,B,C) & midp(G,C,D) & midp(H,D,A) )
   =>
   ( para(E,F,G,H) ) ) ).
\end{lstlisting}

The prover took 0.02s to prove it, producing a proof script (total of
1076 lines), with a final proof, short and readable, but not by
secondary students nor by their teachers (without special training).

\begin{lstlisting}[language = ProverNine,frame=single,basicstyle=\footnotesize]
============================== PROOF =================================
% Proof 1 at 0.02 (+ 0.00) seconds.
% Length of proof is 7.
% Level of proof is 3.
% Maximum clause weight is 13.000.
% Given clauses 100.
64 (all A all B all C all D all M (midp(M,A,B) & midp(M,C,D) -> para(A,C,B,D))) 
   # label(ruleD63) # label(axiom) # label(non_clause).  [assumption].
95 (all A all B all C all D (midp(E,A,B) & midp(F,B,C) & midp(G,C,D) &
midp(H,D,A))) -> para(E,F,G,H) # label(tgtpproblema1) 
   # label(conjecture) # label(non_clause) # label(goal).  [goal].
166 -midp(A,B,C) | -midp(A,D,E) | para(B,D,C,E) 
   # label(ruleD63) # label(axiom).  [clausify(64)].
208 midp(c1,A,B) # label(tgtpproblema1) 
   # label(conjecture).  [deny(95)].
212 -para(c1,c2,c3,c4) 
   # label(tgtpproblema1) # label(conjecture).  [deny(95)].
326 para(A,B,C,D).  [resolve(208,a,166,b),unit_del(a,208)].
327 $F.  [resolve(326,a,212,a)].
============================== end of proof ==========================
\end{lstlisting}

The use of FOF syntax allows the use of other generic ATP, for example,
\emph{Vampire}\footnote{\url{https://vprover.github.io/}} was also
used, with similar results.

\end{document}